\def\BibTeX{{\rm B\kern-.05em{\sc i\kern-.025em b}\kern-.08em
    T\kern-.1667em\lower.7ex\hbox{E}\kern-.125emX}}
\def\*#1{\mathbf{#1}}
\def\~#1{\boldsymbol{#1}}
\newtheorem{theorem}{Theorem}
\newtheorem{definition}{Definition}
\begin{document}

\title{Generalized Groves of Neural Additive Models: Pursuing Transparent Machine Learning Models in Finance
}

\author{\IEEEauthorblockN{Dangxing Chen}
\IEEEauthorblockA{\textit{Zu Chongzhi Center for Mathematics and Computational Sciences} \\
\textit{Duke Kunshan University}\\
Kunshan, China \\
dangxing.chen@dukekunshan.edu.cn}
\and
\IEEEauthorblockN{Weicheng Ye}
\IEEEauthorblockA{
New York, US \\
}
}

\author{\IEEEauthorblockN{
Dangxing Chen  \IEEEauthorrefmark{1} \thanks{$^*$ Corresponding author.} and 
Weicheng Ye
}
\IEEEauthorblockA{Zu Chongzhi Center for Mathematics and Computational Sciences\\
Duke Kunshan University, Kunshan, Jiangsu, China\\
Email: dangxing.chen@dukekunshan.edu.cn and 
weicheng.ye@dukekunshan.edu.cn}
}

\maketitle

\begin{abstract}

While machine learning methods have significantly improved model performance over traditional methods, their black-box structure makes it difficult for researchers to interpret results. For highly regulated financial industries, model transparency is equally important to accuracy. Without understanding how models work, even highly accurate machine learning methods are unlikely to be accepted. We address this issue by introducing a novel class of transparent machine learning models known as generalized groves of neural additive models. The generalized groves of neural additive models separate features into three categories: linear features, individual nonlinear features, and interacted nonlinear features.
Additionally, interactions in the last category are only local. A stepwise selection algorithm distinguishes the linear and nonlinear components, and interacted groups are carefully verified by applying additive separation criteria. Through some empirical examples in finance, we demonstrate that generalized grove of neural additive models exhibit high accuracy and transparency with predominantly linear terms and only sparse nonlinear ones.

\end{abstract}

\begin{IEEEkeywords}
Neural Network, Transparency, Interpretability
\end{IEEEkeywords}

\section{Introduction}

Machine learning (ML) models have been proven extremely successful in analyzing complex and high-dimensional datasets, with improved accuracy over traditional methods, such as linear and logistic regressions (LaLRs). 
On the other hand, there has been an increase in public concern about the use of ML methods without enhanced regulation. As of April 2021, the European Commission (EC) has proposed the Artificial Intelligence Act (AIA) \cite{EUact}, which marks a historic first step towards filling the regulatory gap. Additionally, the review article \cite{carlo2021AI} explains why regulators are obliged to require ML methods to be transparent and explainable. 

In the highly regulated financial sector, transparency and explainability are equally important to the model accuracy. In the handbook on model risk management of the US Office of the Comptroller of the Currency (OCC) published in August 2021, it stressed the importance of evaluation transparency and explainability for risk management when using complex models\cite{OCC2021model}.  
More recently in May 2022, the Consumer Financial Protection Bureau (CFPB) confirmed that anti-discrimination laws require institutions to provide a detailed explanation to consumers when denying a credit application using ML methods \cite{CFPB2022credit}. Researchers are investigating explainable ML tools in light of the growing regulatory requirements  \cite{yang2020enhancing,rudin2019stop,wang2017bayesian}.

Specifically, two directions have been extensively explored by researchers in order to provide explainability. In the first direction, model-agnostic approaches are provided to disentangle a trained black-box model. Several popular methods have been developed including locally interpretable model-agnostic explanations (LIME) \cite{ribeiro2016should}, SHapley Additive Explanations (SHAP) \cite{lundberg2017unified}, and sensitivity-based analysis \cite{horel2020significance}. Despite these successes, it is important to note that ML methods may be intrinsically opaque, as opposed to LaLRs. Therefore, while such explainability may meet explanation requirements for applications in fields of text and image analysis, they may not be adequate enough in financial applications. Furthermore, universal explanations do not exist, and there has been criticism of blindly adopting them \cite{molnar2020pitfalls,kumar2020problems,rudin2019stop,slack2020fooling}. In the second approach, it simplifies model architecture by enhancing its transparency, see \cite{yang2020enhancing, yang2021gami, chen2018interpretable, agarwal2021neural, dumitrescu2022machine}. Explainability and transparency are closely connected concepts: a transparent model is generally easy to comprehend. A particular focus is given to \cite{agarwal2021neural} for a novel class of Neural Additive Models (NAMs). This type of model integrates the approximation capabilities of neural networks (NNs) with the interpretability of general additive models (GAMs) by using a linear combination of NNs. Having been inspired by that, we hope to provide a highly accurate ML method that provides the greatest transparency. 

With transparency in mind, we ask the following question in this paper: given a dataset, what is the simplest architecture of additive models that provides high accuracy? Informally, we are looking for models to satisfy the following conditions:
\begin{itemize}
    \item Based on the data, the model accuracy is comparable with the most accurate ML model.
    \item If the relationship between the feature and the output is linear, we should maintain the output with respect to the feature in its linear additive form.
    \item There is a maximum number of disjoint subsets that cover all feature indices $(1,\dots,p)$, such that each feature can interact only with its group. In other words, if there are no direct or indirect interactions between two features, the output with respect to them should be separated into an additive form. 
\end{itemize}
We wish to adopt the most transparent model if accuracy is not harmed. As pointed out in the model risk management handbook by OCC  \cite{OCC2021model}, ``model risks increase with greater model complexity." Consequently, transparency is a key factor in the selection of models by the institutions. 

The recently introduced NAMs \cite{agarwal2021neural} have provided a very simple architecture, but it does not explicitly incorporate the linear component and it does not yet include interactions. As a result, it does not necessarily have the simplest architecture and might lose accuracy when interactions exist. We propose a forward stepwise selection algorithm to include linear components. 
To simplify nonlinear structures further, we closely examine the additive separability between two features using the additive separability theorem and the universal approximation property of NNs, motivated by \cite{sorokina2008detecting,tsang2018neural,tsang2020does}. 

After incorporating all ingredients, we present a novel class of generalized groves of neural additive models (GGNAMs). GGNAMs categorize all features into three categories: (1) linear features; (2) individual nonlinear features; and (3) nonlinear features that interact. In the case of interacted features, they only interacted with features within the same subset. As a result, interactions occurred locally. 
GGNAMs improve the simplest LaLR with great accuracy while requiring minimal modifications and provide a smooth transition from commonly used traditional methods to advanced ML methods. Some of the advantages of GGNAMs are as follows:
\begin{itemize}
    \item Transparent, and therefore easy to explain.
    \item Friendly to the evaluation of conceptual soundness, detailed post-analyses can be provided. 
    \item Build a bridge between traditional LaLR and state-of-the-art ML, with minimal modifications required.
    \item Model performance compares favorably with complex black-box ML methods.
\end{itemize}


\subsection{Relationship to Existing Literature}

There has been an increasing trend toward exploring transparent models \cite{rudin2019stop}. In general, there are two types of existing approaches: (1) building predetermined transparent models \cite{agarwal2021neural,yang2021gami,chen2022monotonic,chen2023address,lou2013accurate}; (2) simplifying complex black-box ML models \cite{chen2016infogan,tsang2018neural,sorokina2008detecting}. The first approach is easier to understand and implement with very robust results. However, it might be less accurate due to the absence of complex interactions. Our method falls into the second group and allows complex interactions. We differ from other methods in two ways: (1) for architectures, we include linear components to enhance the transparency of the model architecture, and empirical results suggest that linear relationships are sufficient for a large number of features in many datasets; (2) for computations, we examine additive separability, similar to \cite{sorokina2008detecting}, but provide rigorous theoretical support. In comparison with the regularization approach \cite{tsang2018neural,yang2021gami}, this approach provides a more robust and transparent architecture, but it also requires more intensive computational resources. By using a forward stepwise selection of linear components, we reduce the computation.

\section{Prerequisites}

In this section, we briefly review neural additive models (NAMs). Assume we have $\mathcal{D} \times \mathcal{Y}$, where $\mathcal{D}$ is the dataset with $n$ samples and $p$ features and $\mathcal{Y}$ is the corresponding numerical values in regression and labels in classification. We assume the data-generating process (DGP) of
\begin{align}
y = f(\*x) + \epsilon
\end{align}
for regression problems and 
\begin{align}
 y|\*x = \text{Bernoulli}(f(\*x))    
\end{align}
for binary classification problems. Then machine learning (ML) methods are applied to approximate $f$. 

Fully-connected neural networks (FCNNs) have been very successful for approximating high-dimensional complex functions, due to their universal approximation property. Despite their success in approximation, their complicated deep layers with massively connected neurons prevent us from interpreting the result. Neural additive models (NAMs) \cite{agarwal2021neural} improve the explainability of FCNNs by restricting the architecture of neural networks (NNs). NAMs belong to the family of generalized additive models (GAMs) of the form
\begin{align} \label{eq:GAM}
g(\mathbb{E}[y|\*x]) = \alpha + f_1(x_1) + \dots + f_p(x_p),
\end{align}
where $\*x = (x_1, \dots, x_p)$ is the input with $p$ features, $y$ is the target variable, $g(\cdot)$ is the link function (e.g., logistic link function in classification). For NAMs, each $f_i$ is parametrized by a neural network. In an example of four features with one hidden layer, the architecture of a NAM in Figure~\ref{fig:NAM} is compared with a FCNN in Figure~\ref{fig:ANN}.  NAMs are capable of learning arbitrary complex functions if there are no interactions of $f_i$. There are several advantages of NAMs, including approximation capability, transparency, and explainability. Interested readers are referred to the summary in \cite{agarwal2021neural}.  NAMs, however, do not account for the potential complex interactions. While low-order interactions can be added to enhance model complexity \cite{yang2021gami}, such architectures are still somewhat predetermined and may be inadequate for complex datasets. Additionally, it is possible to overlook potential linear structures in $f_i$.

\begin{figure}
  \centering
  \includegraphics[scale=0.4]{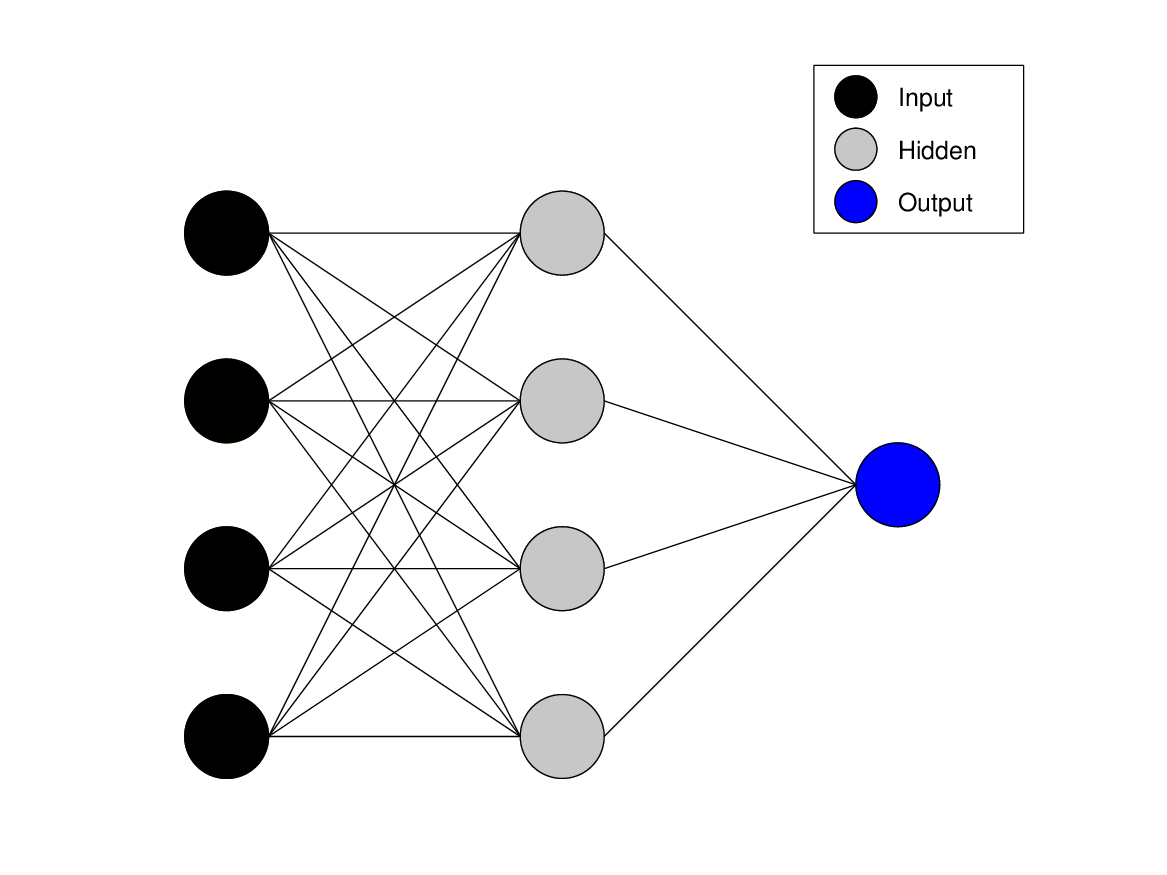}
  \caption{An architecture of a FCNN.}
  \label{fig:ANN}
\end{figure}

\begin{figure}
  \centering
  \includegraphics[scale=0.4]{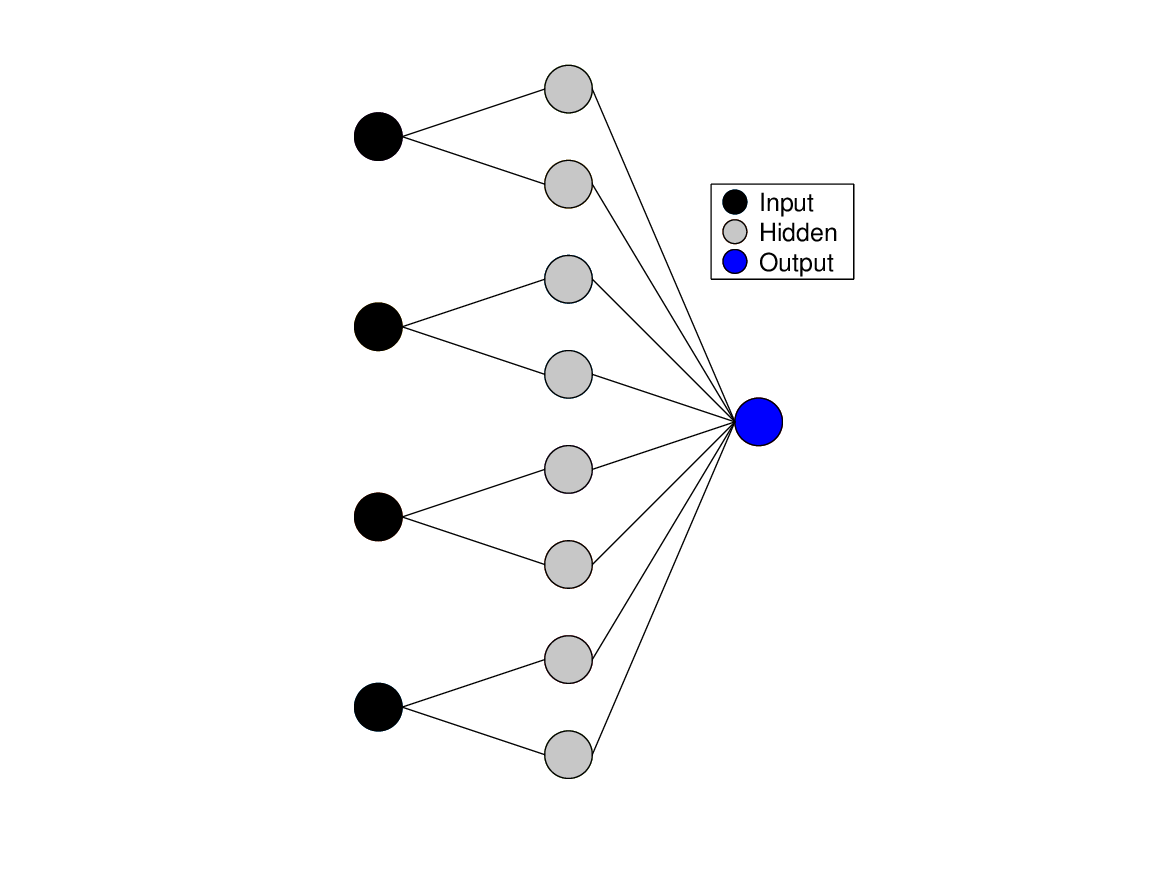}
  \caption{An architecture for a NAM. Features do not interact. }
  \label{fig:NAM}
\end{figure}

\section{Generalized groves of neural additive models}

We intend to develop machine learning models based on the additive form which has the simplest architecture with high accuracy. We will first separate linear and nonlinear components using the forward stepwise selection algorithm. We then propose generalized groves of neural additive models to further sparsify nonlinear structures with the method to identify statistical interactions.

\subsection{Forward stepwise selection}

We now show how to separate linear and nonlinear components. Suppose the input $\*x$ with the feature index  $D = \{1, \dots, p\}$ can be split into linear components $\*x_U$ and nonlinear components $\*x_V$, where $U \cup V = D$ and $U \cap V = \emptyset$, then we assume $f$ takes the additive form of
\begin{align} \label{eq:GNML}
    g(\mathbb{E}[y|\*x]) = \alpha + \sum_{u: u \in U} \beta_{u} x_{u} + f_{V}(\*x_{V}),
\end{align}
where $f_{V}$ is parametrized by a FCNN. In the case of $U=\emptyset$, $g$ reduces to a FCNN; in the case of $V=\emptyset$, $g$ reduces to a  LaLR. An example of the architecture with two linear features and two nonlinear features is shown in Figure~\ref{fig:GNML}. The new architecture \eqref{eq:GNML} improve NAMs by including linear components, and allowing complex interactions between nonlinear components, but at the expense of neglecting potential sparse additive forms in nonlinear components. This will be explored later on. 

Consequently, we need an algorithm to distinguish between linear and nonlinear components. In our empirical experiments, we have found that in many datasets, only a few features exhibit nonlinear behaviors, suggesting that we could use a forward stepwise selection method to distinguish linear from nonlinear features. The procedure is summarized in Algorithm~\ref{alg:forward_selection}. A backward stepwise selection is certainly another option and will be similar. The worst case scenario of the forward selection algorithm requires $\frac{p(p-1)}{2}$ times of training. But, as we will see in empirical experiments, this could happen early. For problems with large $p$, it may be applied after feature selection. As the focus of this paper is not on general feature selection, we will not discuss it further. 



The forward selection algorithm provides an efficient method of distinguishing linear components from nonlinear ones. This method is helpful for reducing the dimensionality of nonlinear features and serves as an initialization method for more complicated models, as we will discuss below. We should clarify that \eqref{eq:GNML} is a stage on the way from NAMs to GGNAMs, in order to demonstrate the process.

\begin{figure}
    \centering
    \includegraphics[scale=0.4]{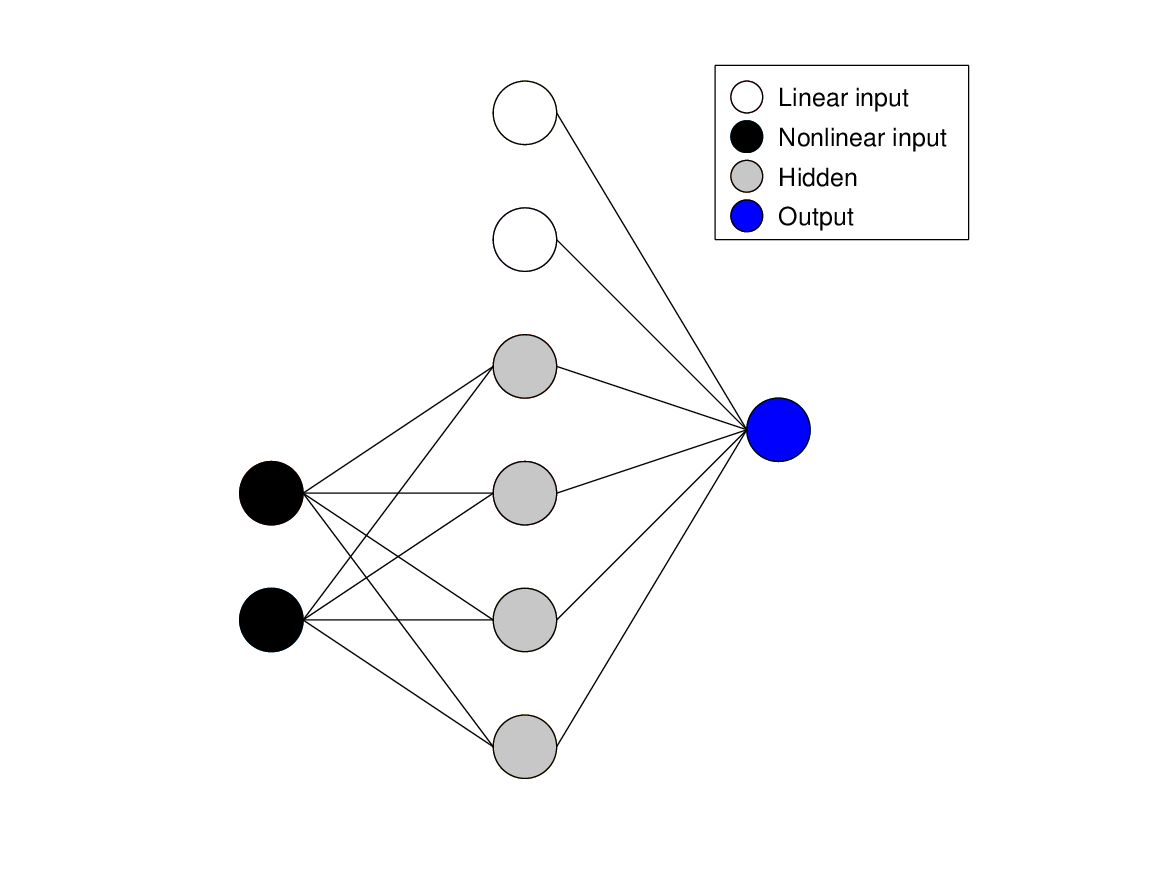}
    \caption{An architecture of \eqref{eq:GNML}. Nonlinear features have complex interactions. }
    \label{fig:GNML}
\end{figure}

\begin{algorithm} [ht!]
\caption{Forward stepwise selection} 
\label{alg:forward_selection}
\begin{algorithmic}[1]
\STATE \textbf{Inputs}: $\epsilon>0$
\STATE Train $f^0$ with loss $e^0$  by a LaLR
\STATE Train $f^p$ with loss $e^p$ by a FCNN
\STATE $i=1$
\STATE $U = (1, \dots, p)$ and $V = \emptyset$
\WHILE{$e^p-e^i>\epsilon$}
\FOR{$u$ in $U$}
\STATE Train $f_u^i$ with loss $e^i_u$ using $U \setminus u$ and $V \cup u$ 
\ENDFOR
\STATE $m = \text{argmin}_{u: u \in U} e^i_u$
\STATE $U = U \setminus m $, $V = V \cup m$
\STATE $e^i = e^i_m$, $f^i = f^i_m$
\STATE $i = i + 1$
\ENDWHILE
\STATE \textbf{Output}: $f^{i-1}$
\end{algorithmic}
\end{algorithm}

\subsection{Generalized groves of neural additive model}

We present generalized groves of neural additive models (GGNAMs), which further sparsify nonlinear interactions in \eqref{eq:GNML}. Features are divided into two categories:
\begin{itemize}
    \item Linear component $\*x_U$, similar to \eqref{eq:GNML}.
    \item Nonlinear component $\*x_V$: subsets $v \in V$ are allowed to be either a single element or multiple elements that allow interactions.
\end{itemize}
Consequently, there are three types of features: linear features, individual nonlinear features, and interacted nonlinear features. It is worth emphasizing that GGNAMs allow different interactions groups to avoid global interactions. Different from \eqref{eq:GAM} and \eqref{eq:GNML}, GGNAMs have the following form:
\begin{align} \label{eq:GSNAM}
    g(\mathbb{E}[y|\*x]) = \alpha + \sum_{u: u \in U} \beta_{u} x_{u} + \sum_{v: v \in V} f_v(\*x_v).
\end{align}
As an simple example, we can have
\begin{align*}
    g(\mathbb{E}[y|\*x]) = \alpha + \beta x_1 + f_1(x_2) + f_2(x_3,x_4),
\end{align*}
which is visualized in Figure~\ref{fig:GSNAM}. GGNAMs offer advantages over NAMs by allowing for linear components and complicated interactions.

\begin{figure}
  \centering
  \includegraphics[scale=0.4]{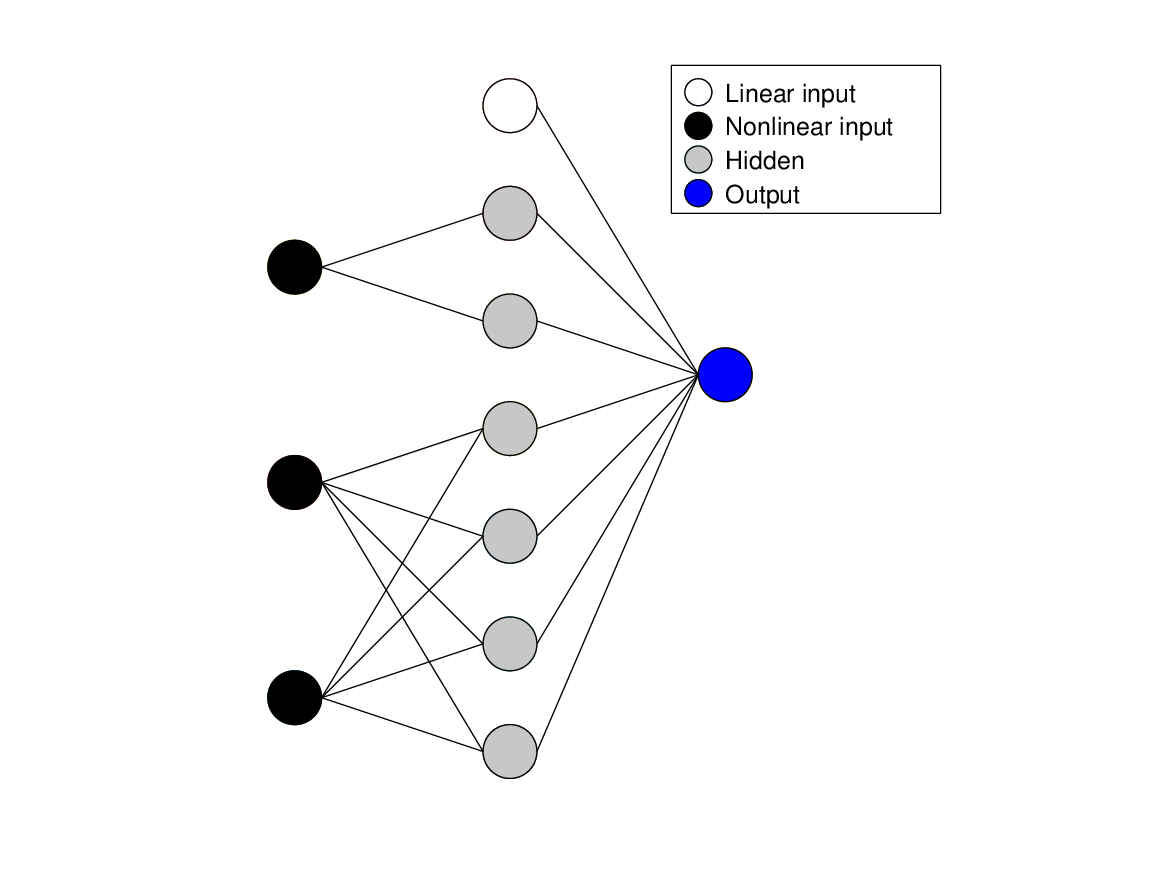}
  \caption{An architecture of the GGNAM. It splits features into linear component, individual nonlinear component, and interacted nonlinear component. }
  \label{fig:GSNAM}
\end{figure}

\subsection{Identify statistical interactions}

While GGNAMs have enabled flexible architectures that can take into account a variety of possibilities, we still require a convenient method for determining the architecture of GGNAMs. As a first step, we employ the forward selection algorithm~\ref{alg:forward_selection} to reduce the dimensions of nonlinear features. Once linear features are determined, we can now investigate interactions among nonlinear features.

Statistical interactions have been studied extensive in the exist literature \cite{sorokina2008detecting,tsang2018neural,tsang2020does}. For simplicity, suppose there are two groups: $\*x$ can be split into two components $\*x_U$ and $\*x_V$, with $U \cup V = D$ and $U \cap V = \emptyset$, where $D = \{1, \dots, p\}$. Extending it to multiple groups is straightforward. Our goal is to test whether a function is additively separable. 
\begin{definition}
We say a function $f$ with $D$ is strictly additive separable for $U$ and $V$ if 
\begin{align}
f(\*x)  = g(\*x_U) + h(\*x_V)
\end{align}
for some functions $g$ and $h$, $U \cup V = D$, and $U \cap V = \emptyset$. 
\end{definition}
We wish to identify these two groups. Both groups have a variety of features and exhibit complex interrelationships, which presents a challenge. To simplify the procedure, we focus on pairwise features $x_i$ and $x_j$ and examine whether there are direct interactions between them. That is,  we wish to know if $x_i \in U$ and $x_j \in V$, where $U \cap V = \emptyset$. Unfortunately, $U$ and $V$ are unknown in advance. To avoid explicitly requiring information for $U$ and $V$, we generalize the idea to allow duplicates for both $U$ and $V$.
\begin{definition}
We say a function $f$ with $D$ is additive separable for $U$ and $V$ if 
\begin{align}
f(\*x)  = g(\*x_U) + h(\*x_V)
\end{align}
for some functions $g$ and $h$, $U \cup V = D$, and $U$ and $V$ are not necessarily disjoint. 
\end{definition}
Clearly, additive separability covers strictly additive separability. Without knowing $U$ and $V$, this generalization allows us to reconsider larger sets $D \setminus i$ and $D \setminus j$, which cover $U$ and $V$. In addition, the following theorem can be easily verified. 
\begin{theorem}[Additive Separability Theorem] \label{thm:separable}
Suppose $f$ with $D$ can be split into two disjoint sets $U, V$ with $i \in U$ and $j \in V$, then $f$ is additive separable for $D \setminus i$ and $D \setminus j$ if and only if $f$ is strictly additive separable for $U$ and $V$. 
\end{theorem}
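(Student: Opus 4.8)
The plan is to read both implications directly off Definitions~1 and~2; this is purely a question of which coordinates each summand may depend on, so no smoothness or approximation property of $f$ is needed. Two preliminary remarks set things up. First, the standing hypothesis fixes a partition $D=U\cup V$ with $U\cap V=\emptyset$, $i\in U$ and $j\in V$, so in particular $i\neq j$. Second, ``$f$ can be split into the disjoint sets $U,V$ with $i\in U$, $j\in V$'' is, once unwound, the very statement that $f$ is strictly additive separable for $U$ and $V$ in the sense of Definition~1, i.e. $f(\*x)=g(\*x_U)+h(\*x_V)$ for some $g,h$. Consequently the ``only if'' half --- additive separability of $f$ for $D\setminus i$ and $D\setminus j$ implies strict additive separability for $U$ and $V$ --- is immediate: its conclusion is the standing hypothesis. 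The useful reading of this half is operational: the criterion on $D\setminus i$ and $D\setminus j$ is what one actually tests on a fitted model, while the hypothesis is the structural assumption that an underlying disjoint split exists.

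The ``if'' half carries the content. Assume $f(\*x)=g(\*x_U)+h(\*x_V)$. The first step is to record the two containments $U\subseteq D\setminus j$ and $V\subseteq D\setminus i$: these hold because $U\cap V=\emptyset$ together with $j\in V$ forces $j\notin U$, and symmetrically $i\notin V$. Hence $g(\*x_U)$ depends only on the coordinates indexed by $D\setminus j$ and $h(\*x_V)$ only on those indexed by $D\setminus i$; viewing $g$ as a function $\tilde g$ of $\*x_{D\setminus j}$ and $h$ as a function $\tilde h$ of $\*x_{D\setminus i}$ then gives $f(\*x)=\tilde h(\*x_{D\setminus i})+\tilde g(\*x_{D\setminus j})$. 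Since $i\neq j$, the sets $D\setminus i$ and $D\setminus j$ still satisfy $(D\setminus i)\cup(D\setminus j)=D$, so this is precisely additive separability of $f$ for $D\setminus i$ and $D\setminus j$ in the sense of Definition~2, and the argument is complete.

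There is no genuine obstacle --- which is why the theorem can be ``easily verified'' --- but two points deserve care. One must use $i\neq j$ so that $D\setminus i$ and $D\setminus j$ still cover $D$ (automatic here since $i\in U$, $j\in V$ and $U\cap V=\emptyset$). And one must notice that the conclusion lands in Definition~2, not Definition~1: the index sets $D\setminus i$ and $D\setminus j$ overlap on $D\setminus\{i,j\}$, so what is produced is additive separability with duplicates allowed rather than a strict, disjoint one. That overlap is exactly the feature being exploited --- it is what lets the criterion comparing $D\setminus i$ with $D\setminus j$ be checked without knowing the true groups $U$ and $V$ in advance.
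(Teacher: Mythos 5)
Your ``if'' half (strict separability for $U,V$ implies separability for $D\setminus i$ and $D\setminus j$) is exactly the paper's ``conversely'' step: since $j\notin U$ and $i\notin V$ one has $U\subseteq D\setminus j$ and $V\subseteq D\setminus i$, so $p(\*x_U)+q(\*x_V)$ can be re-read as $q_1(\*x_{D\setminus i})+p_1(\*x_{D\setminus j})$. The gap is in the other half. You dismiss the ``only if'' implication as immediate by reading the hypothesis ``$f$ with $D$ can be split into two disjoint sets $U,V$'' as already asserting that $f$ is strictly additively separable for $U$ and $V$, so that the conclusion merely restates the hypothesis. That reading trivializes the equivalence and is not the one the paper works with: in the paragraph before Definition~1, ``split'' refers to partitioning the input $\*x$ into components $\*x_U$ and $\*x_V$ with $U\cup V=D$ and $U\cap V=\emptyset$, i.e.\ a partition of the index set with no separability of $f$ presupposed; and the ``only if'' direction is precisely the one the methodology needs (from the tested fact that a model additive over $D\setminus i$ and $D\setminus j$ matches the full model, one wants to conclude that $x_i$ and $x_j$ can be placed in genuinely disjoint groups). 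The paper accordingly proves it: from $f(\*x)=g(\*x_{D\setminus i})+h(\*x_{D\setminus j})$ it argues that $g$ is separable for $U\setminus i$ and $V$ and $h$ is separable for $U$ and $V\setminus j$, writes $f=g_1(\*x_{U\setminus i})+g_2(\*x_V)+h_1(\*x_U)+h_2(\*x_{V\setminus j})$, and regroups into $p(\*x_U)+q(\*x_V)$. Your proposal contains no counterpart to this argument.

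Moreover, this direction cannot be had for free. Take $D=\{1,2,3\}$, $U=\{1,2\}$, $V=\{3\}$, $i=1$, $j=3$, and $f=x_1x_2+x_2x_3$: then $f$ is additively separable for $D\setminus i=\{2,3\}$ and $D\setminus j=\{1,2\}$ (take $g=x_2x_3$, $h=x_1x_2$), yet it is not strictly separable for $U$ and $V$, and the intermediate claim ``$g$ is separable for $U\setminus i$ and $V$'' fails here. So the passage from a decomposition over the overlapping sets $D\setminus i$, $D\setminus j$ to a decomposition over the disjoint sets $U$, $V$ is where the mathematical content of the theorem sits, and it is exactly the point at which the hypothesis on $U,V$ must do real work (indeed, the example shows the paper's own intermediate step needs that hypothesis to be licensed). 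A complete proof has to engage with this step rather than interpret it away; as it stands, your write-up establishes only the easy containment direction.
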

\begin{proof}
If $f$ is separable for $D \setminus i$ and $D \setminus j$, then exists functions $g$ and $h$  such that
\begin{align*}
f(\*x) = g(\*x_{D \setminus i}) + h(\*x_{D \setminus j}).
\end{align*}
But then $g$ is separable for $U \setminus i$ and $V$, and $h$ is separable for $U$ and $V \setminus j$. Therefore, we can write
\begin{align*}
f(\*x) &= g_1(\*x_{U \setminus i}) + g_2(\*x_{V}) + h_1(\*x_U) + h_2(\*x_{V \setminus j}) \\
&= p(\*x_U) + q(\*x_V),
\end{align*}
for some functions $g_1,g_2,h_1,h_2,p,q$. Conversely, we have
\begin{align*}
f(\*x) &= p(\*x_U) + q(\*x_V) = p_1(\*x_{U \cup V \setminus j}) + q_1(\*x_{U \setminus i \cup V}) \\
&= q_1(\*x_{D \setminus i}) + p_1(\*x_{D \setminus j})
\end{align*}
for some functions $p_1$ and $q_1$. 
\end{proof}
We learn from Additive Separability Theorem~\ref{thm:separable} that in order to check the separability of $x_i$ and $x_j$, we do not necessarily need to consider specific $U$ and $V$. Therefore, we follow \cite{sorokina2008detecting}'s approach and test directly between $\*x_{D \setminus i}$ and $\*x_{D \setminus j}$. NNs have the universal approximation theorem, which allows them to learn $f$ arbitrarily well under a general assumption, as well as $g$ and $h$ in the additive form $f=g+h$. It is, therefore, convenient to compare a FCNN to a GGNAM with two nonlinear groups $D \setminus i$ and $D \setminus j$. We did not allow duplicate features in subsets in our definition of \eqref{eq:GSNAM} for better demonstration, however, we should note there is no problem with allowing duplicate features. Then, $x_i$ and $x_j$ can be separated if there is almost no difference in validation process, for example, 
\begin{align}
\mathbb{E}[|f(\*x)-g(\*x_{D \setminus i}) - h(\*x_{D \setminus j})|] < \epsilon.
\end{align}
Note that this is only one example and other metrics are certainly acceptable. Motivated by this, suppose the model accuracy is denoted by acc($f$), we consider a separability matrix $\*A$ s.t. 
\begin{align} \label{eq:sep_matrix}
A_{i,j} =  
\begin{cases}
\text{acc}(f(\*x)), & \ \text{if } i = j, \\
\text{acc}(g(\*x_{D \setminus i}) + h(\*x_{D\setminus j})), & \ \text{if } i \neq j.
\end{cases} 
\end{align}
If $|A_{i,i}-A_{i,j}|<\epsilon$ for some small $\epsilon$, then we conclude that there is no interactions. $\*A$ offers an intuitive understanding of the consequences arising from decoupling features. In selecting $\epsilon$, there is a trade-off between accuracy and interpretation, which should be determined by problems and users' appetites. Different $\epsilon$ can be useful even for the same dataset for various purposes. When predicting housing prices, a private company might choose a very small $\epsilon$ to seek arbitrage opportunities, whereas a finance researcher might choose a large $\epsilon$ to improve explanation when studying market efficiency. After interactions have been identified, features are grouped together if they have direct interactions or indirect interactions through some intermediate features. The overall procedure can be outlined in Algorithm~\ref{alg:GSNAM}.

\begin{algorithm} [ht!]
\caption{Train a GGNAM} 
\label{alg:GSNAM}
\begin{algorithmic}[1]
\STATE \textbf{Inputs}: $\epsilon>0$, $D=\{1, \dots, p\}$
\STATE Train $f_{\text{LaLR}}$ by a LaLR
\STATE Train $f_{\text{FCNN}}$ by a FCNN
\STATE Perform forward selection algorithm~\ref{alg:forward_selection} with $f_{\text{LaLR}}$ and $f_{\text{FCNN}}$ to split $D$ into linear component $U$ and nonlinear component $V$
\STATE Calculate the separability matrix $\*A$ by \eqref{eq:sep_matrix}
\STATE Split $V$ into disjoint $v_i$ based on $\*A$
\STATE Train a $f_{\text{GGNAM}}$ with $U$ and $V$
\STATE \textbf{Output}: $f_{\text{GGNAM}}$
\end{algorithmic}
\end{algorithm}


\section{Empirical examples}

This section evaluates the performance of models for two classification problems and one regression problem. Specifically, we compare linear and logistic regressions (LaLRs), fully-connected neural networks (FCNNs), neural additive models (NAMs), and generalized groves of neural additive models (GGNAMs).  We randomly split datasets into training ($80\%$) and test ($20\%$). For hyperparameter tuning of separating linear and nonlinear features as well as interactions, a 20$\%$ hold-out validation set is further split from the training set. The area under the curve (AUC) for the test set is used to measure performance in classification and the root mean-square-error (RMSE) is used in the regression. Then, we provide a detailed post-analysis. For example, function behaviors associated with certain features are plotted and analyzed. In such cases, intercepts are subtracted as they are not relevant. 

For model architectures, we use the same structure for FCNNs and individual NNs in NAMs and GGNAMs so that they have the same number of parameters.  For classification, NNs contain one hidden layer with five units, logistic activation, and no regularization. For regression, NNs contain two hidden layers with $[16,8]$ units, ReLU activation, and a $L^2$ penalty is applied with $\lambda = 2e^{-4}$. Since such simple architectures are compared favorably with results from the literature, we do not explore more complex ones. 



Accuracy results are summarized in Table~\ref{tab:all_result} and architectures of GGNAMs are summarized in Table~\ref{tab:archi_result}. According to these results, ML models outperform the LaLR, indicating the power of ML models. 
GGNAMs achieve the same level of accuracy as even the most complex FCNNs, indicating that they are capable of approximating complex datasets. In addition, GGNAMs have very transparent architectures. There are a large number of linear components and a very small number of interacted nonlinear components. These architectures are much simpler than FCNNs, as well as NAMs in many instances. Next, we will provide more details and interpretations of empirical examples. 


\begin{table}[h]
    \centering
    \caption{Model performance for all datasets. GGNAMs achieve the same level of accuracy to FCNNs.}
    \label{tab:all_result}
    \begin{tabular}{ccccc}
    \hline
    Methods/Dataset  & TCS (AUC) & PCB (AUC) & GE (RMSE)  \\ \hline
    LaLR  & $0.721$ & 0.677 & 0.151  \\ \hline
    FCNN & $0.759$ & 0.908 & 0.142 \\ \hline
    NAM  & $0.759$ & 0.885 & 0.140 \\ \hline
    GGNAM & $0.766$ & 0.907 & 0.141\\ \hline 
    \\
    \end{tabular}
\end{table}

\begin{table}[h]
    \centering
    \caption{Architectures of GGNAMs. 
    The dominant components are linear and the interacted nonlinear components are sparse.
    }
    \label{tab:archi_result}
    \begin{tabular}{ccccc}
    \hline
    Dataset/Arch  & Linear & Indi Nonlinear & Nonlinear Groups  \\ \hline
    TCS & $x_2-x_{5}$ & $x_{1},x_6$ &   \\ 
        & $x_7-x_{23}$ & & \\ \hline
    PCB & $x_1-x_{16}$ & $x_{19}$ & $(x_{17},x_{22})$ \\ 
        & $x_{18}, x_{20}, x_{21}$ & & \\ 
        & $x_{23} - x_{34}$ & & \\ \hline
    GE & $x_1-x_8$ & $x_9$ & & \\ 
       & $x_{10} - x_{13}$ & &  \\ \hline
    \end{tabular}
\end{table}

\subsection{Taiwan credit scoring data}

\subsubsection{Data description}

Taiwan credit scoring (TCS) dataset \cite{yeh2009comparisons} is concerned with clients' probability of default (PoD). Card-issuing banks over issued cash and credit cards to unqualified applicants, which led to high delinquencies for banks during this period. The study aimed to identify high risk clients based on their credit histories and to deny applications that were deemed unlikely to be repaid. In credit scoring, interpretation by regulators is strictly mandatory \cite{CFPB2022credit}: a detailed explanation must be given to customers upon denial. Please see \cite{arrieta2020explainable} for a more detailed explanation of the requirements.

\subsubsection{Results}


By forward selection algorithm, out of 23 features, only two are selected as nonlinear features, namely $x_6$ and $x_{1}$ in order, where $x_1$ calculates the amount of the given credit and $x_6$ is the repayment status in September 2005. This significantly reduces the dimensionality of nonlinear features. Then, we calculate the separability matrix \eqref{eq:sep_matrix} based on AUC in Table~\ref{tab:taiwan_inter}. Separating these features is not harmful. Accordingly, we confirm that there are no interactions. We then visualize selected nonlinear features in Figure~\ref{fig:taiwan_6_1}. As the result of $x_1$ illustrates, clients with more given credits are likely to be more reliable, as banks would not extend credits if they did not trust them. The monotonic behavior of $x_6$ starting from 0, illustrates that with delayed payments, clients are more likely to default. Additionally, for these two features, nonlinearities can be summarized by diminishing marginal effects (DMEs). This means that additional changes will have a diminishing effect. As an example, the first delinquency will increase the client's risk level significantly, while delinquencies of four or five times do not seem to matter very much. DMEs are commonly found in social science, and nonlinearity of this kind leads to the failure of the LaLR.  Using a LaLR would easily underestimate the risk of past-due payments when there are few, and overestimate the risk when there are a lot. The inability to capture DMEs, in this case, can be presented as the primary reason for the use of GGNAMs over LaLRs to regulators.  In theory, we would expect DMEs to also apply for other features, including those related to delinquency. Nevertheless, there is insufficient evidence to determine their significance. Non-linearities that are too weak to be detected are ignored in favor of a simpler model of transparency. Such a visualization is easy by explainable ML methods, such as NAMs and GGNAMs, but is inapplicable to more complex ML methods, such as FCNNs. The GGNAM achieved the simplest architecture with an equal degree of accuracy in this dataset, therefore it should be preferred.

\begin{table}[h]
    \centering
    \caption{Separability matrix of the GGNAM for the TCS dataset. }
    \label{tab:taiwan_inter}
    \begin{tabular}{cccccc}
    \hline
    Feature   & $x_{1}$  & $x_{6}$ & \\ \hline
    $x_{1}$ & $0.764 $ & $0.761$ &  \\ \hline
    $x_{6}$ &          & $0.764$ & \\ \hline
    \\
    \end{tabular}
\end{table}



\begin{figure}[h]
    \centering
    \caption{Output of the GGNAM with respect to $x_1$ and $x_6$ in the TCS dataset. Diminishing marginal effects are observed for both features. }
    \label{fig:taiwan_6_1}
    \includegraphics[scale=0.4]{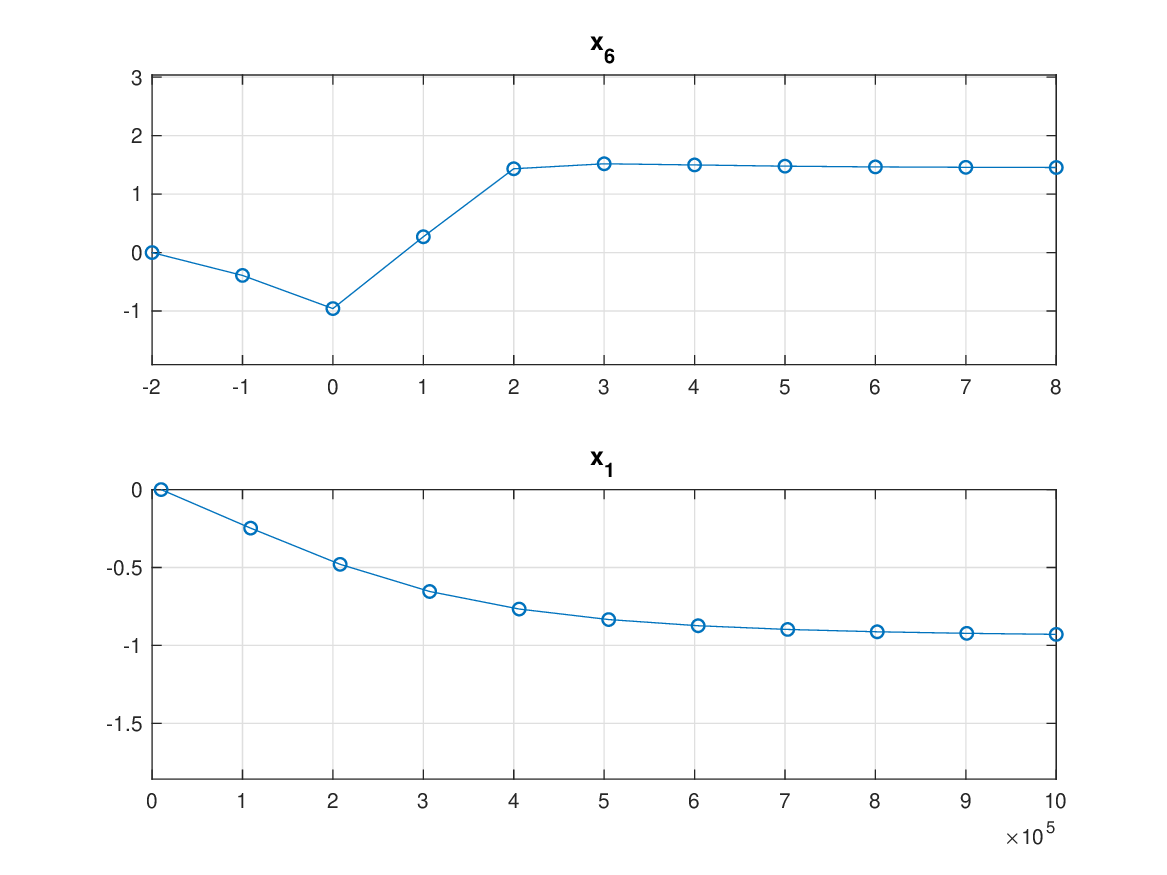}
\end{figure}

\subsection{Polish companies bankruptcy dataset}

\subsubsection{Data description}

The dataset presented here represents predictions regarding bankruptcies of Polish companies \cite{zikeba2016ensemble}. There has always been an active field in business and economics dedicated to bankruptcy prediction \cite{li2010hybrid}, and its significance cannot be emphasized enough. The predictive power of such studies is important, but from a financial perspective, it seems more important to understand their reasoning. The study of bankruptcy causes could potentially provide a wealth of information to regulators and policymakers. 

Bankrupt companies were analyzed from 2000 to 2012, while still operating companies were evaluated from 2007 to 2013. 
It contains 7027 samples, of which 271 represent bankrupt companies. 64 indicators are used to represent the business conditions of companies, which is a common approach used in accounting research. It is noticeable that some indicators are highly correlated or contain overlapping information. In order to de-noise the data, we eliminated highly correlated ($>$0.95 or $<$-0.95) features and left with 34 features.  We believe that this is a necessary practice for this study, as highly correlated features complicate interpretations of their effects. In addition, there are 5835 missing entries, which have been replaced by averaged feature values.

\subsubsection{Results}


Surprisingly, by forward stepwise selection, out of 34 features, only three features, $x_{17}$, $x_{19}$, and $x_{22}$, are identified as nonlinear features, leading to a much smaller dimension which we didn't foresee. In this dataset, $x_{17}$ calculates $\frac{\text{profit on operating activities}}{\text{financial expenses}}$, $x_{19}$ records the logarithm of total assets, and $x_{22}$ calculates $\frac{\text{operating expenses}}{\text{total liabilities}}$. The next step is to determine whether certain nonlinear features can be separated. According to the separability matrix in Table~\ref{tab:bankrupt_inter}, we conclude that only $x_{17}$ and $x_{22}$ definitely cannot be separated. We find this rather intriguing. The interaction among these features is to be expected; however, we did not anticipate an architecture that would be so sparse. The significance of these two features is not immediately apparent, among others. Nevertheless, if interactions are not necessary, then they should be omitted for transparency.
On the one hand, this result supports the conventional linear approach, such as Altman's Z-score \cite{altman1968financial}, that linear models are sufficient to provide fairly accurate results. For this reason, linear relationships are favored to provide greater transparency and explainability. On the other hand, the results indicate that ML models could improve the model's performance. Certain features exhibit strong nonlinearity and cannot be well approximated by simple linear models. A GGNAM model has been compromised in both ways, similar to LaLR but with comparable accuracy to FCNN. 


To verify whether such an interaction indeed exists, we calculate their joint marginal probability of bankruptcy (JMPoB) based on features $x_{17}$ and $x_{22}$ and record these results in a matrix, visualized in  Figure~\ref{fig:bank_17_22}. Both features are evenly divided into ten intervals, with approximately 700+ samples in each interval. Due to uneven distributions of features, we did not specify axes to facilitate better visualization. There are 100 boxes in total for two features with ten intervals each. In the absence of sufficient samples ($<30$) in the box, we denote the probability as $-0.1$, for purposes of visualization. It is evident that an abnormal pattern has been observed: for samples taken in the $10^{\text{th}}$ interval of $x_{17}$: the JMPoBs for small amounts of $x_{22}$ are exceptionally high. Patterns such as these are only found for large $x_{17}$, indicating that these features interact. The GGNAM allows us to demystify the black-box structure. In the following example, we calculate $f(x_{17},x_{22})$ at median values of samples within intervals of $x_{17}$ and $x_{22}$. Figure~\ref{fig:bank_GSNAM_17_22} demonstrates that the characteristics described above are somewhat reflected in this figure. The observed patterns suggest that in normal circumstances, a larger percentage of operating expenses over total liabilities increases the risk of bankruptcy. However, when a company makes significant profits from its operations over its financial expenses, a small percentage of operating expenses over total liabilities may be indicative of bankruptcy. In such a case, we are reminded to focus not only on the features' main effects but also on their interaction effects. It also suggests that in extreme circumstances, abnormal patterns might occur in financial datasets and we may not be able to easily extend the linear relationships and common knowledge to such circumstances. Complex ML models offer a solution to modeling such complex phenomena. This type of interaction has not yet been included in LaLRs and NAMs \cite{agarwal2021neural}. 
Visualizations like these in GGNAMs should help us better comprehend the dataset and figure out the cause of the bankruptcy.

\begin{table}[h]
    \centering
    \caption{Separability matrix of the GGNAM for the bankruptcy dataset}
    \label{tab:bankrupt_inter}
    \begin{tabular}{ccccccc}
    \hline
    Feature & $x_{17}$ & $x_{19}$ & $x_{22}$ \\ \hline
    $x_{17}$ & 0.913 & 0.908    & 0.881    \\ \hline
    $x_{19}$ &       & 0.913    & 0.910      \\ \hline
    $x_{22}$ &       &          & 0.913   \\ \hline
    \\
    \end{tabular}
\end{table}

\begin{figure}[h]
    \centering
    \includegraphics[scale=0.4]{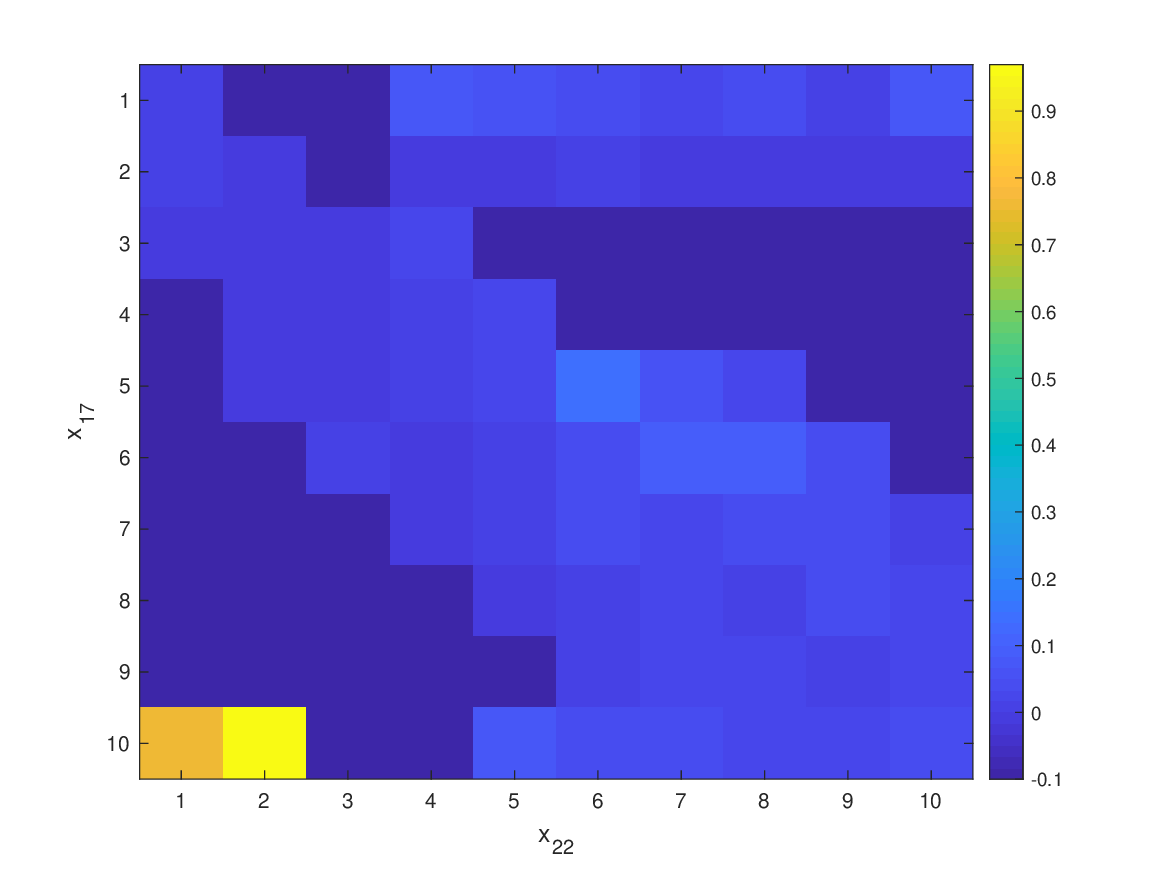}
    \caption{Joint marginal probabilities of bankruptcy for the features $x_{17}$ and $x_{22}$ for the bankruptcy dataset. In the $10^{\text{th}}$ interval of $x_{17}$, probabilities are exceptionally high in $1^{\text{th}}$ and $2^{\text{th}}$ interval of $x_{22}$. }
    \label{fig:bank_17_22}
\end{figure}



\begin{figure}[h]
    \centering
    \includegraphics[scale=0.4]{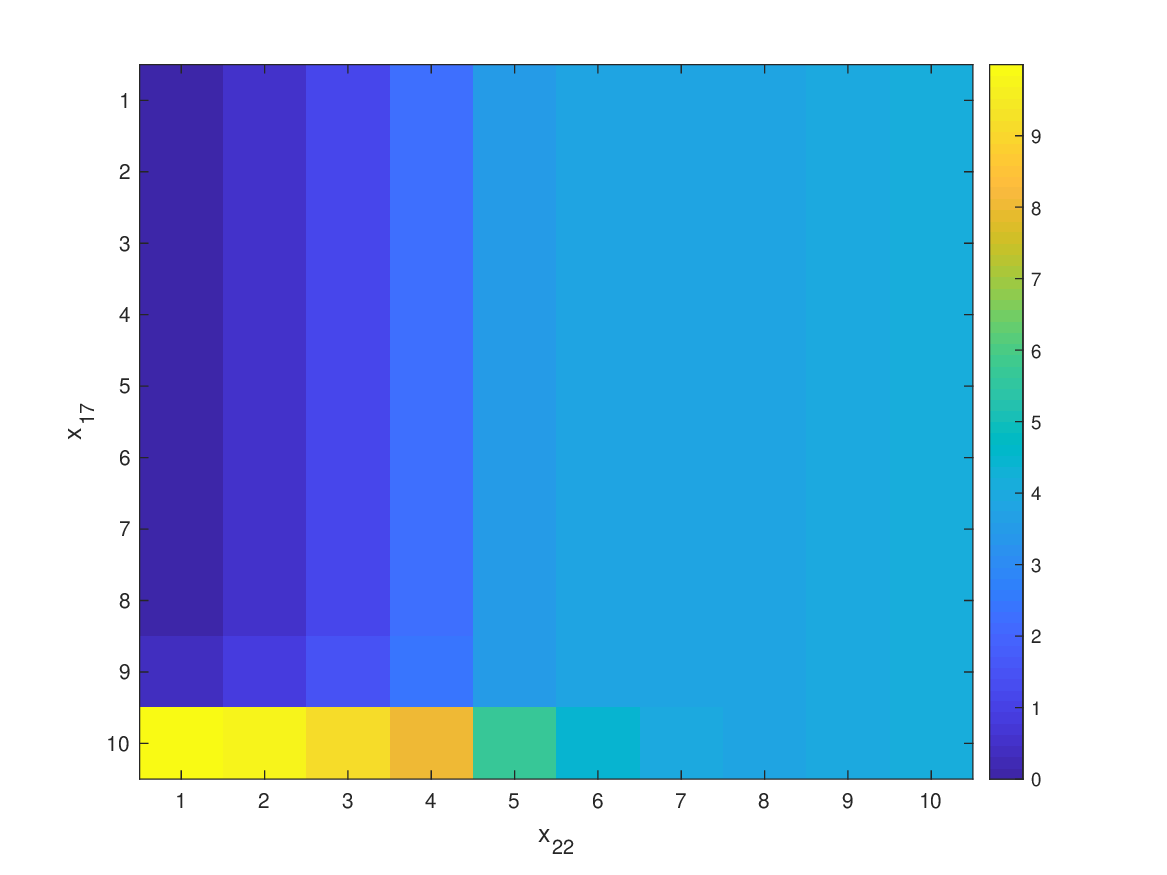}
    \caption{Output of the GGNAM with respect to $x_{17}$ and $x_{22}$ in the bankrupt dataset. The output is monotonically increasing with respect to $x_{22}$ in $1^{\text{th}}-9^{\text{th}}$ intervals of $x_{17}$, but is monotonically decreasing in $10^{\text{th}}$ interval of $x_{17}$. }
    \label{fig:bank_GSNAM_17_22}
\end{figure}

\subsection{Productivity prediction of garment employees dataset}

\subsubsection{Data description} This dataset aims to predict the productivity of garment workers \cite{al2019deep,imran2021mining}. The specific objective of the study was to determine if there was a discrepancy between the targeted productivity set by authorities and the actual productivity in order to minimize potential losses. In spite of the objective to focus on the predictive power, it is important to keep in mind the possible ethnic and legal implications. Consider the following example: If a ML method predicts working overtime will enhance productivity, it would be unwise to recommend authorities increasing overtime. To better understand features with sensitive information, such as overtime, a transparent model would be desirable. 

Detailed data were collected from the industrial engineering department of a garment manufacturing facility of a reputed company in Bangladesh. This dataset contains the production data for the sewing and finishing department for three months between January 2015 and March 2015. The dataset consists of 1197 samples and includes 14 features. We neglect the date feature and summarize the rest of features as follows:
There are 506 missing entries for $x_6$, and we replace them with the average value as in the last example.

\subsubsection{Results}
Based on the forward selection algorithm, only $x_{9}$ is identified as the nonlinear feature, so there's no need to check interactions. $x_9$ represents the amount of financial incentive that enables or motivates a particular course of action. Figure ~\ref{fig:garments_GSNAM_9} clearly demonstrates the DME of financial incentive. On the basis of the above observation, one might suggest providing financial incentives, but only within certain limits. The GGNAM simplifies the NAM further by only using one nonlinear feature.


\begin{figure}
    \centering
    \includegraphics[scale=0.4]{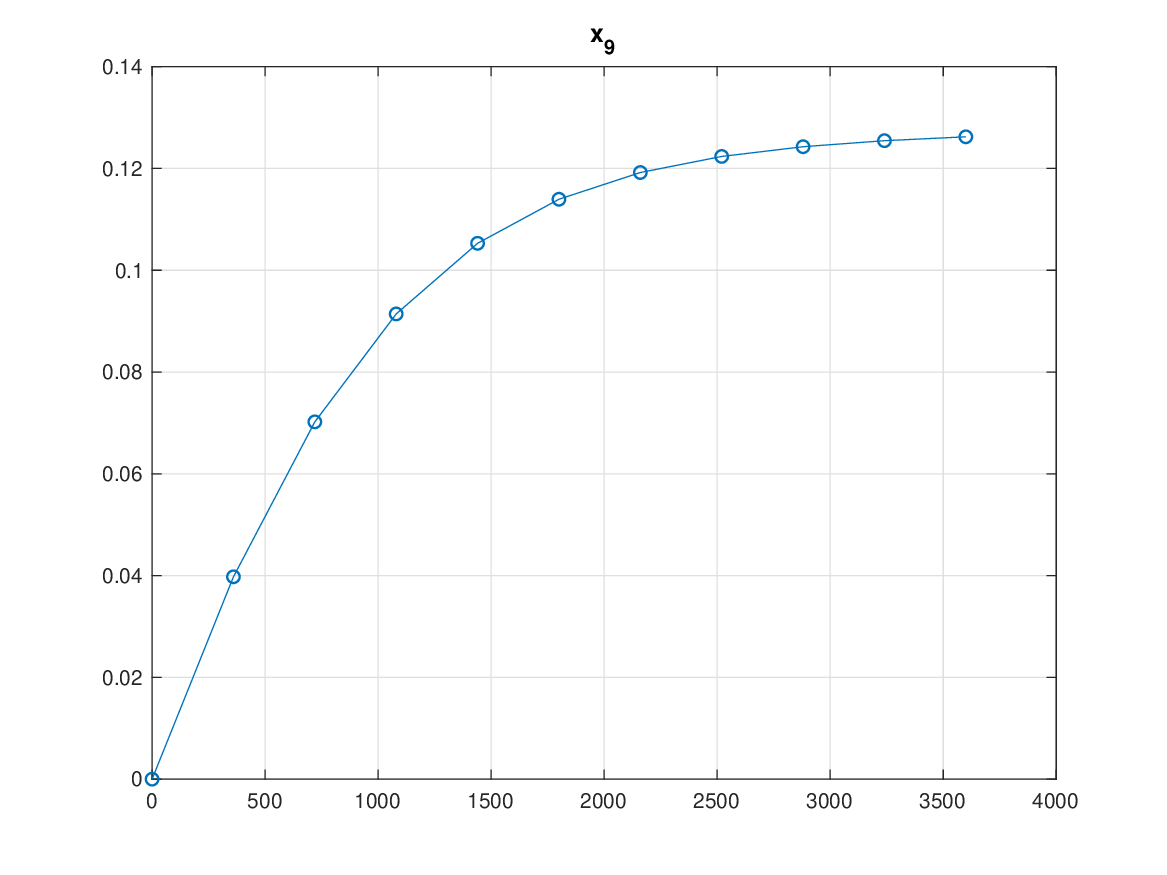}
    \caption{Output of the GGNAM with respect to $x_{9}$ in the garment dataset. The diminishing marginal effect is observed.}
    \label{fig:garments_GSNAM_9}
\end{figure}

\section{Future Work}


Currently, the existing approach does not take domain knowledge into account
. Recent research has demonstrated that domain-knowledge-inspired ML models \cite{chen2023address,chen2022monotonic,gupta2020multidimensional,repetto2022multicriteria} could lead to more conceptually sound and fair models. According to \cite{chen2023address}, detecting statistical interactions without considering domain knowledge can result in oversimplified and unreasonable models. We will explore how to build transparent models with domain knowledge in future work. 


\bibliographystyle{IEEEtran}
\bibliography{chen}

\end{document}